\newtheorem{assumption}{Assumption}
\newtheorem{proposition}{Proposition}
\DeclareMathOperator*{\argmin}{arg\,min}
\newcommand{\beginsupplement}{%
        \setcounter{table}{0}
        \renewcommand{\thetable}{S\arabic{table}}%
        \setcounter{figure}{0}
        \renewcommand{\thefigure}{S\arabic{figure}}%
     }
\newcommand{\bx}{\mathbf{x}}
\newcommand{\bxstar}{\mathbf{x^*}}
\newcommand{\bystar}{\mathbf{y^*}}
\newcommand{\Kxx}{\mathbf{K_{xx}}}
\newcommand{\Kxstar}{\mathbf{K_{x*}}}
\newcommand{\Kstarstar}{\mathbf{K_{**}}}
\newcommand{\bmustar}{\mathbf{\mu^*}}
\newcommand{\by}{\mathbf{y}}
\newcommand{\bI}{\mathbf{I}}
\newcommand{\Bx}{\mathbf{B_x}}
\newcommand{\Bstar}{\mathbf{B_*}}
\newcommand{\gauss}[1]{\mathcal{N}\left( #1 \right)}
\newcommand{\given}{\, \vert \,}
\newcommand{\var}{\sigma^2}
\title{\textbf{Meta-Learning Mean Functions \\ for Gaussian Processes}}
\author[1]{Vincent Fortuin}
\author[2]{Heiko Strathmann}
\author[1]{Gunnar R\"atsch}
\affil[1]{Institute of Machine Learning, ETH Z\"urich, Z\"urich, Switzerland}
\affil[2]{Deepmind, London, United Kingdom}
\date{}
\begin{document}

\maketitle

\begin{abstract}

When fitting Bayesian machine learning models on scarce data, the main challenge is to obtain suitable prior knowledge and encode it into the model.
Recent advances in meta-learning offer powerful methods for extracting such prior knowledge from data acquired in related tasks.
When it comes to meta-learning in Gaussian process models, approaches in this setting have mostly focused on learning the kernel function of the prior, but not on learning its mean function.
In this work, we explore meta-learning the mean function of a Gaussian process prior.
We present analytical and empirical evidence that mean function learning can be useful in the meta-learning setting, discuss the risk of overfitting, and draw connections to other meta-learning approaches, such as model agnostic meta-learning and functional PCA.

\end{abstract}

\section{Introduction}

Bayesian methods are well suited for learning tasks with scarce data, because they offer a principled way to include prior knowledge about the problem \citep{McNeish2016using}.
A model that is particularly popular in this space due to its flexibility and data-efficiency is the Gaussian process (GP) \citep{Rasmussen2006-zv}.
If the prior knowledge encoded into the GP prior is suitable for the task, only very few observations may be needed to achieve high predictive performance.
However, the acquisition and effective encoding of this prior knowledge has been a longstanding challenge in the field \citep{Carlin2008bayesian}.

A powerful framework to acquire prior knowledge about tasks is meta-learning \citep{Vilalta2002-rj}.
Meta-learning refers to a method in which knowledge is gained by solving a set of specific tasks (the meta-tasks) and subsequently used to improve the model's performance on a different task (the target task) \citep{Vilalta2002-rj}.
The method is therefore concerned with making use of a set of tasks in order to approach another task better.
The incorporation of this prior knowledge (so called \emph{meta-knowledge}) into the learning model is called \emph{inductive transfer} \citep{Thrun2012learning}.

The GP lends itself particularly easily to the desired purpose of a target model, because it offers a way to perform nonlinear regression on small amounts of data, while being able to incorporate prior knowledge in a Bayesian manner \citep{Rasmussen2006-zv}.
This prior knowledge can be encoded into the GP model by modifying the parameters of either its kernel function or its mean function.
Previous work on meta-learning in Gaussian processes has mostly focused on the kernel function \citep{Bonilla2008-rj, Widmer2010inferring, Skolidis2012-di}, while the mean function has been largely ignored.

In this work, we explore using the mean function for meta-learning in GPs.
We discuss potential reasons why this has not previously been done and why these reasons might not be well-founded.
Moreover, we present analytical arguments for mean function learning and validate them empirically on different tasks, as well as drawing connections to other successful meta-learning approaches. 

We make the following contributions:

\begin{itemize}
	\item Analyse the risk of overfitting for mean function learning in Gaussian processes in standard supervised learning as well as meta-learning.
	\item Present analytical arguments and empirical evidence that mean function learning can be beneficial in the meta-learning setting.
	\item Discuss connections to model-agnostic meta-learning and functional PCA.
\end{itemize}

In the following, we are going to formalize the idea of GP prior learning in a meta-learning setting (Sec.~\ref{sec:meta-learning}), discuss the potential risk of overfitting when learning the parameters of a GP prior (Sec.~\ref{sec:overfitting}), present an argument for why mean function learning can be useful under certain conditions (Sec.~\ref{sec:superior_mean}), and discuss the connections between the proposed approach and model-agnostic meta-learning (Sec.~\ref{sec:maml}) as well as functional PCA (Sec.~\ref{sec:fpca}).
Thereafter, we are going to provide empirical evidence of our claims (Sec.~\ref{sec:experiments}), a review of the related literature (Sec.~\ref{sec:related_work}), and a discussion of our work (Sec.~\ref{sec:conclusion}).
An overview of our proposed method is depicted in Figure~\ref{fig:overview}.
Inspired by the idea of deep kernel learning \citep{Wilson2015-ji}, we parameterize all our meta-learnable kernels and mean functions in this paper as deep neural networks (more details in the appendix in Sec.~\ref{sec:deep_mean}).

\section{Meta-learning in Gaussian Processes}
\label{sec:meta-learning}

\begin{figure*}
\centering
\includegraphics[width=0.9\linewidth]{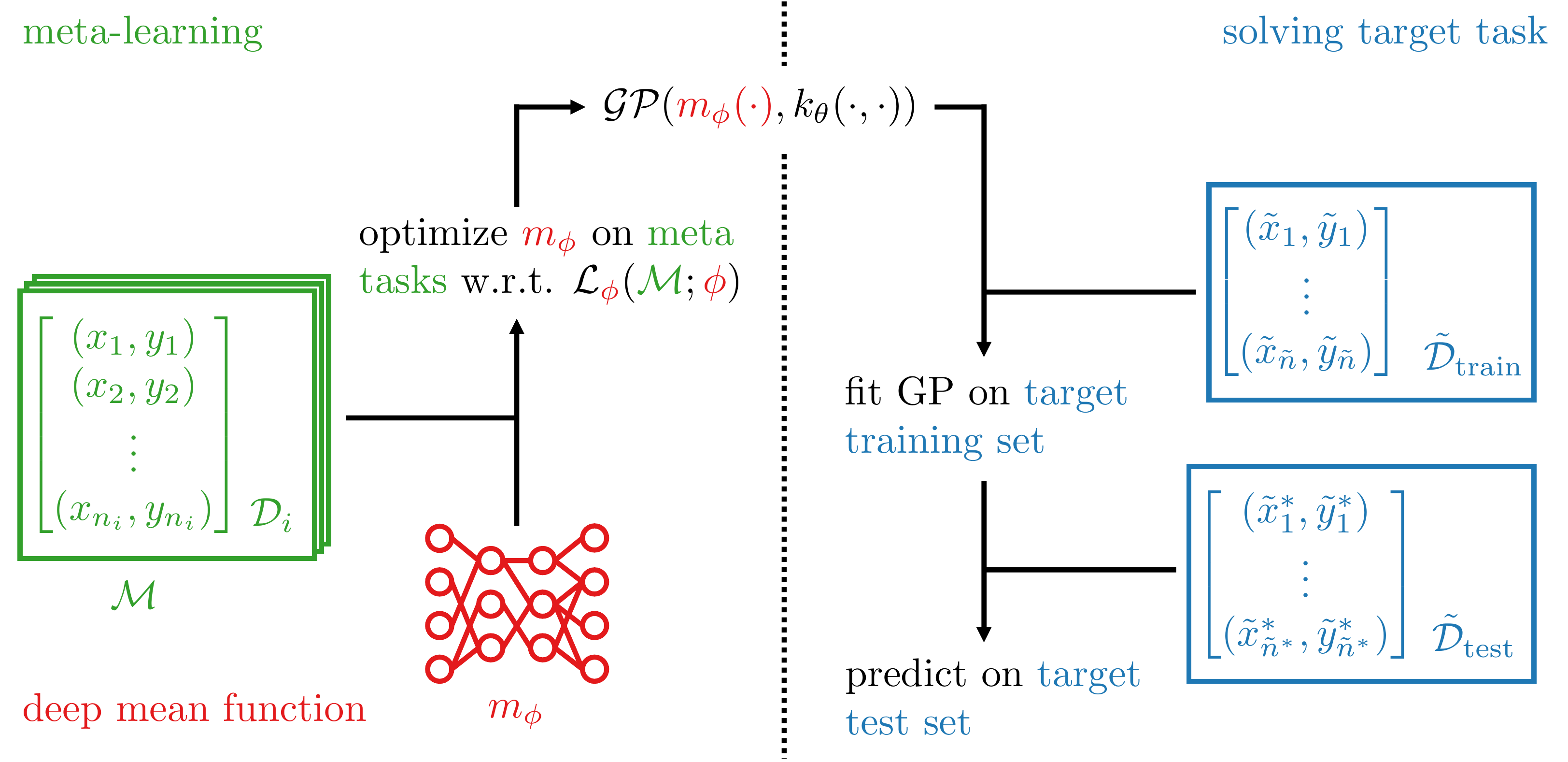}
\caption{Overview of the proposed deep mean function meta-learning framework. A similar procedure is used to meta-learn the kernel function parameters $\theta$, or both, mean and kernel function parameters.
}
\label{fig:overview}
\end{figure*}

In order to explain the setting of this work, we are first going to define meta-learning more formally.
We are following the common setting from \citet{baxter2000model}, where we have a so-called \emph{environment} $\tau$ and our tasks are sampled \emph{i.i.d.} from this environment.
For each task, we sample a data distribution and a sample size from the environment $(\mathcal{P}_i, n_i) \sim \tau$ and then sample a data set $\mathcal{D}_i \sim \mathcal{P}_i^{n_i}$, where $\mathcal{P}_i^{n_i}$ is the distribution of data sets of size $n_i$ under $\mathcal{P}_i$.

The set of meta-tasks $\mathcal{M}$ consists of $m$ data sets $\mathcal{M} := \lbrace \mathcal{D}_i \rbrace_{i=1}^{m}$, with one data set for each meta-task.
Each of these data sets contains observations $\mathcal{D}_i := \lbrace \mathbf{x}_i, \mathbf{y}_i \rbrace$, where $\mathbf{x}_i \in \mathbb{R}^{n_i \times d}$ and $\mathbf{y}_i \in \mathbb{R}^{n_i \times p}$ for tasks in which the respective functions to be learned are defined as $f_i : \mathbb{R}^d \rightarrow \mathbb{R}^p$.
In this setting, all meta-tasks share the same input and output dimensionalities $d$ and $p$, but they can have different numbers of observations $n_i$.

Additionally to the meta-tasks, we have a target task with a data set $\mathcal{\tilde{D}} := \lbrace \mathbf{\tilde{x}}, \mathbf{\tilde{y}}, \mathbf{\tilde{x}^*}, \mathbf{\tilde{y}^*} \rbrace$, where $\mathbf{\tilde{x}} \in \mathbb{R}^{\tilde{n} \times d}$ and $\mathbf{\tilde{y}} \in \mathbb{R}^{\tilde{n} \times p}$ are the training points and their respective values and $\mathbf{\tilde{x}^*} \in \mathbb{R}^{\tilde{n}^* \times d}$ and $\mathbf{\tilde{y}^*} \in \mathbb{R}^{\tilde{n}^* \times p}$ are the test points and their respective values.
We assume this target task to also be sampled from the environment $\tau$.
As mentioned above, we assume there to be much more data in the meta-tasks than in the target task, that is, $n_{\text{train}} = \sum_{i = 1}^m n_i \gg \tilde{n}$.

In order to predict on $\lbrace \mathbf{\tilde{x}^*, \tilde{y}^*} \rbrace =: \mathcal{\tilde{D}}_{\text{test}}$, we want to fit a GP to $\lbrace \mathbf{\tilde{x}}, \mathbf{\tilde{y}} \rbrace =: \mathcal{\tilde{D}}_{\text{train}}$ with a prior
\begin{equation}
f(\cdot) \sim \mathcal{GP}(m_\phi (\cdot), k_\theta(\cdot,\cdot)) \; ,
\label{eq:gp_prior}
\end{equation}
where the mean and kernel functions are parameterized by sets of parameters $\phi$ and $\theta$ respectively.
These parameters can now be optimized on the meta-task set, that is,
$\phi^* = \argmin_\phi \mathcal{L}_{\phi} \left( \mathcal{M}; \phi \right)$ and
$\theta^* = \argmin_\theta \mathcal{L}_{\theta} \left( \mathcal{M}; \theta \right)$
with a suitable loss function $\mathcal{L}_\psi$ for parameters $\psi \subset \{ \phi, \theta \}$.
This approach can be seen as gaining knowledge from solving the meta-tasks $\mathcal{M}$ and using the parameters $\phi$ and $\theta$ to encode the thus acquired meta-knowledge into the GP prior for the task on $\mathcal{\tilde{D}}$.

In GP regression, the loss function is often chosen to be the \emph{negative log marginal likelihood} (LML) \citep{Rasmussen2006-zv}.
The LML on a meta-task can be computed as
\begin{align}
	\mathcal{L}_{\psi} (\mathcal{D}_i ; \psi) &= - \log p ( \mathbf{y}_i \vert \mathbf{x}_i ; \psi) \nonumber \\
	&= - \log \int p(\mathbf{y}_i \vert \mathbf{f}_i, \mathbf{x}_i) \, p(\mathbf{f}_i \vert \mathbf{x}_i ; \psi) \, d\mathbf{f}_i \; ,
	\label{eq:LML}
\end{align}
where all learnable parameters of the GP prior (i.e., the mean parameters, kernel parameters, or both) are denoted as $\psi$ and $\mathbf{f}_i$ is a vector of the function values of the latent function $f_i(\cdot)$ evaluated at the points $\mathbf{x}_i$.

Given this loss function over a single meta-task, we define the loss over all meta-tasks as a sum over their individual losses, that is,
\begin{equation}
	\mathcal{L}_{\psi} \left( \mathcal{M}; \psi \right) := \sum_{i=1}^m \mathcal{L}_{\psi} \left( \mathcal{D}_i; \psi \right) \; .
\end{equation}

This loss can then be optimized using any general-purpose optimization method.
In this work, we use Stochastic Gradient Descent (SGD) (Alg.~\ref{alg:training}).

\begin{algorithm}
\caption{Algorithm to optimize the GP prior parameters $\psi$ on the set of meta-tasks $\mathcal{M}$ using SGD.}
\begin{algorithmic}
	\REQUIRE	 set of meta-tasks $\mathcal{M} = \lbrace \mathcal{D}_i \rbrace_{i=1}^{m}$, learning rate $\eta$
	\WHILE{not converged}
		\FORALL{$\mathcal{D}_i \in \mathcal{M}$}
			\STATE {Compute $\mathcal{L}_{\psi} (\mathcal{D}_i ; \psi) = - \log p ( \mathbf{y}_i \vert \mathbf{x}_i ; \psi)$}
			\STATE {Update $\psi \leftarrow \psi - \eta \nabla_{\psi} \mathcal{L}_{\psi} (\mathcal{D}_i ; \psi)$}
		\ENDFOR
	\ENDWHILE
\end{algorithmic}
\label{alg:training}	
\end{algorithm}

Once the parameters of the GP prior are optimized, we can use the prior to fit a GP to $\mathcal{\tilde{D}}_{\text{train}}$ and predict on $\mathcal{\tilde{D}}_{\text{test}}$.
If we evaluate the predictive posterior of the GP on the test points, it yields \citep{Rasmussen2006-zv}
\begin{align}
\label{eq:gp-posterior}
&\qquad \quad p \left( \mathbf{\tilde{f}^*} | \mathbf{\tilde{y}}, \mathbf{\tilde{x}}, \mathbf{\tilde{x}^*} \right) = \mathcal{N} \left( \mathbf{m^*}, \mathbf{K^*} \right) \\
&\text{with} \nonumber \\
&\mathbf{m^*} = m_\phi(\mathbf{\tilde{x}^*}) + K_\theta^{*x} \left( K_\theta^{xx} + \sigma^2 I \right)^{-1} (\mathbf{\tilde{y}} - m_\phi(\mathbf{\tilde{x}})) \nonumber \\
&\mathbf{K^*} = K_\theta^{**} - K_\theta^{*x} \left( K_\theta^{xx} + \sigma^2 I \right)^{-1} K_\theta^{x*} \nonumber
\end{align}
where $K^{x*}_\theta = K^{*x\top}_\theta$ denotes the kernel matrix (also known as the Gram matrix) with $(K^{x*}_\theta)_{ij} = k_\theta(\tilde{x}_i,\tilde{x}_j^*)$ and similarly for $K^{xx}_\theta$ and $K^{**}_\theta$.

\section{Meta-learning mean functions for GPs}
\label{sec:model}

In this section, we are going to lay out the main theoretical arguments for our approach.
Firstly, we are going to give an intuition for why overfitting can be a risk in conventional mean function learning, but not in a meta-learning setting.
Secondly, we are going to present an analytical treatment of mean function and kernel function learning in a meta-learning setting and show why mean function learning can be useful under certain conditions.
Lastly, we are going to highlight connections between our proposed approach and popular meta-learning methods, such as model-agnostic meta-learning and functional PCA.

\subsection{On the risk of overfitting in GP prior learning}
\label{sec:overfitting}

Optimizing hyperparameters in machine learning models always brings about a certain risk of overfitting.
The extent of this risk depends on the data set on which different hyperparameters are optimized, on the objective function, and on the procedure that is used to optimize this function.

One of the most principled ways of choosing the hyperparameters $\psi$ of the GP prior is Bayesian model selection \citep{Rasmussen2006-zv}.
Bayesian model selection chooses the hyperparameters according to their posterior probabilities given the data.
These probabilities can be computed as
\begin{equation}
	p ( \psi \vert \mathbf{y}, \mathbf{x} ) = \frac{p(\mathbf{y} \vert \mathbf{x}, \psi) \, p(\psi)}{p(\mathbf{y} \vert \mathbf{x})} \; ,
	\label{eq:hyper-posterior}
\end{equation}
where the denominator is given by
\begin{equation}
	p(\mathbf{y} \vert \mathbf{x}) = \int p(\mathbf{y} \vert \mathbf{x}, \psi) \, p(\psi) \, d\psi \, .
	\label{eq:hyper-integral}
\end{equation}

Instead of this maximum \emph{a posteriori} (MAP) inference over $\psi$, practitioners often resort to a maximum likelihood estimate (MLE) for reasons of tractability, by optimizing the likelihood term in the numerator of \eqref{eq:hyper-posterior} with respect to $\psi$.
Note that the negative logarithm of this term is exactly the LML from \eqref{eq:LML}.
Since this approach uses an MLE method to optimize the \emph{hyperparameters} instead of the \emph{parameters} of the model, it is sometimes called \emph{type II} maximum likelihood (ML-II) approximation \citep{Rasmussen2006-zv}.

For the GP model from \eqref{eq:gp-posterior}, the LML can be computed in closed form \citep{Rasmussen2006-zv} as
\begin{equation}
\begin{split}
	\log &p ( \mathbf{\tilde{y}} \vert \mathbf{\tilde{x}}, \psi) = \\
	&\; - \frac{1}{2} (\mathbf{\tilde{y}} - m_\phi(\mathbf{\tilde{x}}))^{\top} \left( K_\theta^{xx} + \sigma^2 I \right)^{-1} (\mathbf{\tilde{y}} - m_\phi(\mathbf{\tilde{x}})) \\
	&\; - \frac{1}{2} \log \vert K_\theta^{xx} + \sigma^2 I \vert \\
	&\; - \frac{n}{2} \log 2 \pi \; ,
\end{split}
\label{eq:explicit-LML}
\end{equation}
with hyperparameters $\psi := (\phi, \theta)$ and where $\vert M \vert$ denotes the determinant of matrix $M$.

It is evident that this LML naturally decomposes into three terms.
The first term depends on the kernel parameters, the mean function parameters, and the data.
It can be seen as measuring the goodness-of-fit of the model to the data.
The second term only depends on the kernel parameters and can be seen as a complexity penalty.
The third term normalizes the likelihood and is constant with respect to the data and the parameters.

Notice that this objective function contains an automatic tradeoff between data-fit and model complexity when it comes to the kernel parameters, but it does not contain any complexity penalty on the mean function parameters.
Optimizing the mean function parameters w.r.t.\ the LML directly on the training data can therefore lead to overfitting (as we will also see empirically in Sec.~\ref{sec:experiments}).
This might be one of the reasons why mean function learning has not been very popular in the field so far.
Moreover, it has been noted that even for the kernel parameters, the penalty term might sometimes not be strong enough to consistently avoid overfitting \citep{Rasmussen2006-zv}, even though this effect can be additionally combated with regularization \citep{Micchelli2005learning, Cawley2007preventing}.

In the meta-learning setting however, we refrain entirely from optimizing any parameters using the LML on the target training data $\mathcal{\tilde{D}}_{\text{train}}$.
Instead, we only optimize the GP prior's parameters on the meta-tasks $\mathcal{M}$.
Thus, there is no way for the training data of the target task to inform the GP prior and therefore no possibility of overfitting to the training data.
Due to the reduced risk of overfitting, mean function learning seems applicable in this setting, while it is rightfully avoided in the standard supervised learning setting.

\subsection{Mean functions can be useful for meta-learning}
\label{sec:superior_mean}

While learning kernel functions for GPs is a popular area of research, it entails a number of challenges.
The most fundamental challenge is that the kernel function has to be positive definite in order to define an inner product in a suitable Hilbert space \citep{Mercer1909-dq}.
GP mean functions, in contrast, do not suffer from any such constraints and can therefore be learned more freely without taking any special precautions.

The question remains whether we can incorporate useful prior knowledge into the mean function or whether the kernel function is sufficient to encode all our prior knowledge.
Given that many approaches in the literature resort to using a zero mean function and only adjust the kernel, they seem to be implicitly making the following assumption:

\begin{assumption}
	\label{lem:prior_mean}
	Given a parameterized Gaussian Process prior $\mathcal{GP}(m_\phi (\cdot), k_\theta(\cdot,\cdot))$, all the available prior knowledge can be effectively incorporated into the kernel parameters $\theta$ when a na\"ive mean function (i.e., $m_\phi (\cdot) = 0$) is used, such that complete prior knowledge yields an optimal predictive performance.
\end{assumption}

Based on \eqref{eq:gp-posterior}, it is easy to see that this assumption does not always hold.
One can for instance construct a counter-example by having a function that takes zero values in certain parts of the domain and nonzero values in other parts.
If this function is known \emph{a priori}, it can be used as a mean function for the GP prior, leading to an optimal posterior fit.
In contrast to that, there is no kernel that can guarantee an optimal fit if a zero mean function is used.
The analytical details for this counter-example are deferred to the appendix (Sec.~\ref{sec:model_details}).

Moreover, especially in the meta-learning setting, we often deal with a very small number of observations in the target task.
Since the mean function acts on the whole input domain, independent of the number of observations, we would expect its influence to be stronger than the kernel's in such tasks.

Given these insights and the fact that learning mean functions poses a less constrained problem than kernel learning, we propose that mean functions might be a better choice than kernels for meta-learning in GPs under certain conditions.
We will empirically validate this hypothesis in the experimental section (Sec.~\ref{sec:experiments}).

\begin{figure*}
	\centering
	\includegraphics[width=0.9\linewidth]{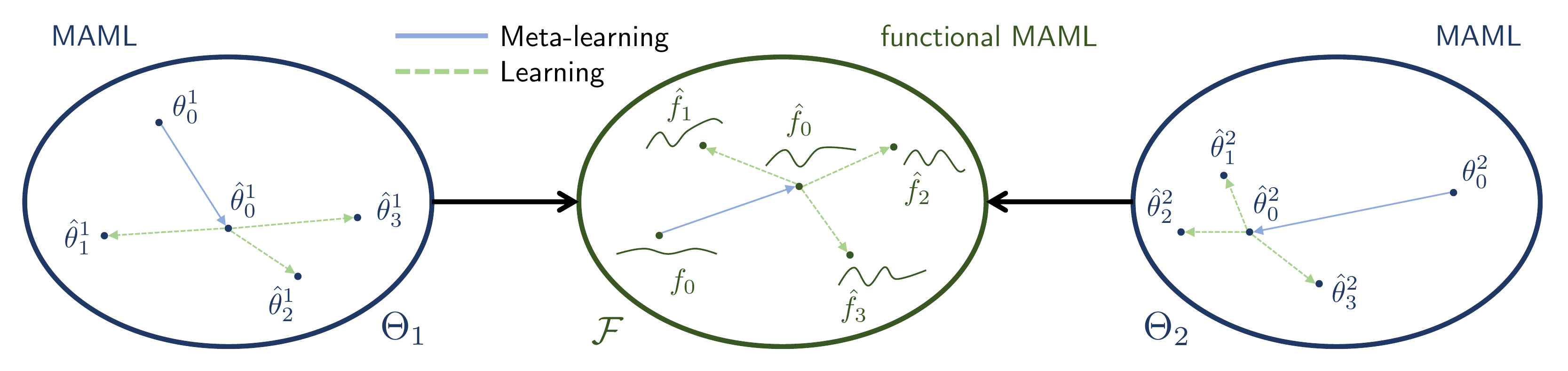}
	\caption{Sketch of the mechanism of MAML \citep{Finn2017-ef} in two different parameterizations of the learning model and of our mean function learning approach. Our approach acts directly in the function space $\mathcal{F}$ instead of the parameter spaces $\Theta_i$, such that for instance the parameters $\hat{\theta}_1^1$ and $\hat{\theta}_1^2$ get mapped to the same function $\hat{f}_1$.}
	\label{fig:maml}
\end{figure*}

\section{Connections to other meta-learning approaches}

Two popular and time-tested methods for meta-learning are model-agnostic meta-learning \citep{Finn2017-ef} and functional PCA \citep{rice2001nonparametric}.
It turns out that these approaches can be related to mean function meta-learning in GPs, thus lending further support to our proposed approach.
We discuss those connections in the following.

\subsection{Model-agnostic meta-learning}
\label{sec:maml}

Model-agnostic meta-learning (MAML) \citep{Finn2017-ef} is a popular framework for meta-learning, especially with neural networks.
The general idea of this method is to find the best initial parameters $\hat{\theta}_0 \in \Theta$ for the target model, such that the required learning updates to get from $\hat{\theta}_0$ to the optimal parameters $\hat{\theta}_i$ for each meta-task $\mathcal{D}_i$ are minimized.

An issue with this framework is the fact that the parameterization of the models is arbitrary.
One can for instance choose two different parameterizations of the model with parameter spaces $\Theta_1$ and $\Theta_2$, leading to different optimal parameters $\{ \hat{\theta}^1_i \in \Theta_1 \}$ and $\{ \hat{\theta}^2_i \in \Theta_2 \}$.
\emph{A priori}, it is not clear which one of these parameterizations (or of all the other possible parameterizations) will yield a better meta-learning performance.

Moreover, in over-parameterized models such as neural networks, there are exponentially many weight space symmetries, that is, models with different parameter vectors that encode the exact same function \citep{badrinarayanan2015understanding}.
If one would take a subset of those and try to find a parameter that minimizes the average distance to all of them, it would in all likelihood end up encoding a different function even though all of the target parameters encode the same one.

In contrast to this, our proposed mean function learning framework acts directly in the function space $\mathcal{F}$ and tries to find the best mean function $\hat{f}_0$ that requires the least number of observations to fit the respective posterior means $\hat{f}_i$ of the meta-tasks well.
It is thus independent of the parameterization of the functions and optimizes the prior directly with respect to the geometry of the associated function space, which can be argued to be more efficient than any arbitrary parameter space \citep{amari1998natural}.
Additionally, its independence on the parameterization would also allow it to compare functions with different numbers of parameters in the same space, which would not be possible with MAML.
Given this perspective, our approach can be seen as a function-space equivalent of MAML (Fig.~\ref{fig:maml}).

\subsection{Functional principal component analysis}
\label{sec:fpca}

Functional PCA (FPCA) is a method from functional data analysis where a function $y(x)$ is modeled as a linear combination of a small number of principal components in a given function space \citep{rice2001nonparametric, james2000principal}.
The function space is often chosen to be parameterized using B-splines as basis functions, such that a point $x$ is represented as a vector $\mathbf{b}(x) = \left[ b_1(x), \dots, b_q(x) \right]^\top$, where the $b_i(\cdot)$'s are $q$ mutually orthogonal splines.
A vector of points $\bx = \left[ x_1, \dots, x_n \right]$ is therefore represented as a matrix $\Bx$ with $(\Bx)_{ij} = b_j(x_i)$.

The generative assumption of FPCA is a linear mixed effect model, where the function $y_i$ is given as
\begin{equation}
	\by_i(\bx) = \Bx \beta + \Bx \gamma_i + \epsilon \; ,
\end{equation}
with $\epsilon \sim \gauss{0, \var}$ and parameters $\{ \beta, \gamma_i \}$. The parameter $\beta$ is shared between different functions, whereas the $\gamma_i$'s are function-specific.
The $\gamma_i$'s are often chosen to have zero mean and covariance $\Gamma$ \citep{rice2001nonparametric}, such that the distribution over $\by_i$ becomes
\begin{equation}
\label{eq:fpca}
	p(\by_i \given \bx; \beta, \Gamma) = \gauss{\Bx \beta, \var \bI + \Bx \Gamma \Bx^\top} \; .
\end{equation}

In reduced-rank models, the covariance can further be approximated by $\Gamma = \Theta \mathbf{D} \Theta^\top$, where $\Theta \in \mathbb{R}^{q \times m}$, $\mathbf{D} \in \mathbb{R}^{m \times m}$ and diagonal, and $m \ll q$ \citep{james2000principal}.

We make the following observation:

\begin{proposition}
\label{prop:gp_fpca}
	When choosing a GP prior with mean function $m(x) = \mathbf{b}(x)^\top \theta_m$ and kernel $k(x, x') = \mathbf{b}(x)^\top \mathbf{b}(x')$, where $\theta_m$ is a meta-learnable parameter, the predictive posterior reduces to the one of FPCA.
\end{proposition}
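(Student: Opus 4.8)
The plan is to recognize both predictive posteriors as the posterior of one and the same Bayesian (weight-space) linear model and then verify that their closed forms coincide. First I would rewrite every GP quantity in the spline basis. Since $k(x,x') = \mathbf{b}(x)^\top \mathbf{b}(x')$, the Gram matrices factor as $K^{xx} = \Bx \Bx^\top$, $K^{**} = \Bstar \Bstar^\top$, and $K^{*x} = \Bstar \Bx^\top = (K^{x*})^\top$, while the prior mean evaluates to $m_\phi(\mathbf{\tilde{x}}) = \Bx \theta_m$ and $m_\phi(\mathbf{\tilde{x}^*}) = \Bstar \theta_m$. Substituting these into \eqref{eq:gp-posterior} expresses the posterior mean $\mathbf{m^*}$ and covariance $\mathbf{K^*}$ purely in terms of $\Bx$, $\Bstar$, $\theta_m$, and $\var$, with the data entering only through the $\tilde{n} \times \tilde{n}$ inverse $(\Bx \Bx^\top + \var\, \bI)^{-1}$.

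Next I would derive the FPCA predictive posterior from \eqref{eq:fpca}. Setting $v := \beta + \gamma_i$ turns the generative model into the Bayesian linear regression $\by_i = \Bx v + \epsilon$ with coefficient prior $v \sim \gauss{\beta, \Gamma}$ and noise $\epsilon \sim \gauss{0, \var \bI}$. Conditioning on the training observations yields a Gaussian posterior over $v$ with covariance $\Sigma_{\text{post}} = (\Gamma^{-1} + \var^{-1} \Bx^\top \Bx)^{-1}$ and a matching mean $\mu_{\text{post}}$, and pushing this through the test design matrix $\Bstar$ gives the FPCA predictive mean $\Bstar \mu_{\text{post}}$ and covariance $\Bstar \Sigma_{\text{post}} \Bstar^\top$. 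Identifying $\theta_m$ with the shared effect $\beta$ and taking $\Gamma = \bI$ --- which is exactly the coefficient covariance induced by the kernel $\mathbf{b}(x)^\top \mathbf{b}(x')$; a general $\Gamma$ is recovered by absorbing its Cholesky factor into the basis and reparameterizing $\theta_m$ accordingly --- makes the two priors identical.

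The crux is then to show that the function-space form from the first step and the weight-space form from the second step are algebraically the same object, which is precisely the standard GP / Bayesian-linear-regression correspondence. The key tool is the push-through identity $\Bx^\top (\Bx \Bx^\top + \var\, \bI)^{-1} = (\Bx^\top \Bx + \var\, \bI)^{-1} \Bx^\top$ (equivalently the Woodbury identity), which converts the $\tilde{n} \times \tilde{n}$ inverse in \eqref{eq:gp-posterior} into the $q \times q$ inverse appearing in the FPCA posterior. Applying it and collecting terms reduces the GP mean to $\Bstar (\Bx^\top \Bx + \var\, \bI)^{-1}(\var\, \theta_m + \Bx^\top \mathbf{\tilde{y}})$ and the GP covariance to $\var\, \Bstar (\Bx^\top \Bx + \var\, \bI)^{-1} \Bstar^\top$, which are exactly $\Bstar \mu_{\text{post}}$ and $\Bstar \Sigma_{\text{post}} \Bstar^\top$ under the identification above. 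I expect the main obstacle to be purely the bookkeeping of this matrix manipulation --- tracking where $\var$ and the prior mean $\theta_m$ enter --- rather than any conceptual difficulty; once the push-through identity is invoked the equality of the two Gaussians follows immediately.
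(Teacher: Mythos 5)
Your proposal is correct, and the algebra you sketch checks out (the GP mean does reduce to $\Bstar(\Bx^\top\Bx+\var\bI)^{-1}(\var\theta_m+\Bx^\top\mathbf{\tilde{y}})$ and the latent covariance to $\var\,\Bstar(\Bx^\top\Bx+\var\bI)^{-1}\Bstar^\top$), but your route differs from the paper's in a meaningful way. The paper never constructs the FPCA posterior at all: it works purely on the GP side, substitutes the factorizations $\Kxx=\Bx\Bx^\top$, $\Kxstar=\Bx\Bstar^\top$ into the predictive posterior over $\bystar$ (observation noise included), and uses the same push-through/Searle identities you invoke to rewrite that posterior as $\gauss{\Bstar\hat{\beta},\,\var\bI+\Bstar\hat{\Gamma}\Bstar^\top}$ with explicit $\hat{\beta}=\theta_m+\theta_c$ and $\hat{\Gamma}=(\bI+\tfrac{1}{\var}\Bx^\top\Bx)^{-1}$ --- i.e.\ it shows the GP posterior lands back inside the FPCA model family of \eqref{eq:fpca}, a conjugacy-style statement. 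You instead prove the literal ``posterior equals posterior'' claim: you recast FPCA as Bayesian linear regression with coefficient prior $v\sim\gauss{\theta_m,\bI}$, derive its predictive posterior in weight space, and match it to the GP's function-space posterior, which is the classical weight-space/function-space duality. Your version makes explicit the prior identification $\beta=\theta_m$, $\Gamma=\bI$ induced by the kernel $\mathbf{b}(x)^\top\mathbf{b}(x')$ (something the paper leaves implicit), while the paper's version delivers closed-form updated parameters $\hat{\beta},\hat{\Gamma}$ showing in what sense the posterior ``is'' an FPCA distribution. One small convention mismatch to fix if you want your result to line up term-by-term with \eqref{eq:fpca}: the paper's target distribution is over noisy observations and so carries a $\var\bI$ term, whereas you compare noise-free latent posteriors (the $\mathbf{\tilde{f}^*}$ posterior of \eqref{eq:gp-posterior} against $\Bstar v$); adding $\var\bI$ to both sides of your final identity closes that gap without affecting correctness.
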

\begin{proof}
	The proof is deferred to the appendix (Sec.~\ref{sec:fpca_proof}).
\end{proof}

Similarly to our mean function learning approach, in the FPCA setting the mean function parameter $\theta_m$ would be optimized using maximum likelihood across all functions (which we would call meta-tasks).
This formulation of FPCA can therefore be seen as a form of meta-learning in GPs, where the mean and kernel function are restricted to be linear, they are sharing an input representation based on a set of spline basis functions, and only the mean function is meta-learned.

\section{Experiments}\label{sec:experiments}

In order to assess the performance of mean function meta-learning and compare it to kernel meta-learning, we performed experiments on synthetic data, on MNIST handwritten digits \citep{LeCun1998-es}, and on a challenging real-world medical data set from the 2012 Physionet Challenge \citep{Silva2012predicting}.
We found strong qualitative and quantitative evidence that mean function learning (on its own and in combination with kernel learning) can outperform kernel learning alone in a wide range of meta-learning scenarios.
In the following, we present the main results, while experimental details are deferred to the appendix (Sec.~\ref{sec:exp_details}).

\begin{table*}
    \centering
    \caption{Performance comparison of GPs with a zero mean function and RBF kernel (\emph{vanilla}), a learned mean function and RBF kernel (\emph{learned mean}), a zero mean and learned kernel function (\emph{learned kernel}) and a learned mean and learned kernel function (\emph{learned both}) on step function regression for different numbers of training points. The performance is measured in terms of likelihood and mean squared error. The values are means and their standard errors of 1000 randomly sampled step functions and sets of training points. Learning both the mean function and the kernel function consistently outperforms the other methods.}
    \resizebox{\textwidth}{!}{
    \begin{tabular}{lrrrrrr}
\toprule
& \multicolumn{2}{c}{$\tilde{n} = 1$} & \multicolumn{2}{c}{$\tilde{n} = 5$} & \multicolumn{2}{c}{$\tilde{n} = 20$} \\
        Method & \multicolumn{1}{c}{likelihood} & \multicolumn{1}{c}{MSE} & \multicolumn{1}{c}{likelihood} & \multicolumn{1}{c}{MSE} & \multicolumn{1}{c}{likelihood} & \multicolumn{1}{c}{MSE} \\
\midrule
vanilla \citep{Rasmussen2006-zv}      &  0.28 $\pm$ 0.00 &  0.335 $\pm$ 0.006 &  0.35 $\pm$ 0.00 &  0.113 $\pm$ 0.003 &  0.47 $\pm$ 0.01 &  0.026 $\pm$ 0.001 \\
learned kernel \citep{Wilson2015-ji} &  0.54 $\pm$ 0.00 &  0.323 $\pm$ 0.007 &  \textbf{0.61 $\pm$ 0.00} &  0.093 $\pm$ 0.004 &  0.68 $\pm$ 0.01 &  \textbf{0.024 $\pm$ 0.001} \\
learned mean (ours) &  0.29 $\pm$ 0.00 &  0.196 $\pm$ 0.002 &  0.36 $\pm$ 0.00 &  0.095 $\pm$ 0.002 &  0.46 $\pm$ 0.01 &  0.027 $\pm$ 0.001 \\
learned both (ours)        &  \textbf{0.55 $\pm$ 0.00} &  \textbf{0.186 $\pm$ 0.002} &  \textbf{0.61 $\pm$ 0.00} &  \textbf{0.078 $\pm$ 0.002} &  \textbf{0.69 $\pm$ 0.01} &  \textbf{0.022 $\pm$ 0.001} \\
\bottomrule
\end{tabular}
}
\label{tab:step_functions}
\end{table*}

\paragraph{Baselines}

Throughout the experiments, we mainly compare four different approaches: (a) a na\"ive GP prior (zero mean and RBF kernel) which we call \emph{vanilla} GP; (b) deep kernel learning \citep{Wilson2015-ji} with a zero mean function; (c) a learned deep mean function (our proposed model) with an RBF kernel; and (d) a learned deep mean function with a learned deep kernel.
The neural networks used for the learned mean functions and the learned kernels share the exact same architecture, except for the last layer (which is of size 1 for the mean function and of size 2 for the kernel).
Note that we aim to purely compare the benefits of learning kernels and mean functions in general here and not to achieve the maximum performance on the chosen tasks.
We hence refrain from comparing these approaches to more sophisticated meta-learning models and leave this to future work.

\paragraph{Performance measures.}
In our experiments, we assess the performance of the models on the target tasks with two different measures: the test mean squared error (MSE) and the test data log likelihood (often just denoted as \emph{likelihood}).
Note that the likelihood depends on the whole predictive posterior, while the MSE only depends on its mean.
Since the mean function of the GP only affects the posterior by shifting its mean, it can be hypothesized that learning a good mean function for the GP will affect the MSE more strongly than the likelihood.
Similarly, since the kernel function parameterizes the covariance of the process, it could be expected to affect the likelihood more strongly than the MSE.
When interpreting the results of our experiments, one should therefore keep in mind that the MSE slightly favors GPs with a good mean function, while the likelihood slightly favors good kernel functions.
\newline
However, the decision which one of the two measures is more important depends on the intended use of the GP in the target task.
If the GP is used to predict values at test points from its predictive posterior mean, the MSE is the more relevant measure.
If it is instead used to draw multiple samples from the whole posterior or to estimate the probability of different outcomes, the likelihood is more relevant.
We do not make any limiting assumptions on the use cases in this work and the ultimate decision for a measure (and therefore potentially a preferred model) is left to the practitioner.

\paragraph{Step function regression.}

\begin{figure}
\centering
\includegraphics[width=0.8\linewidth]{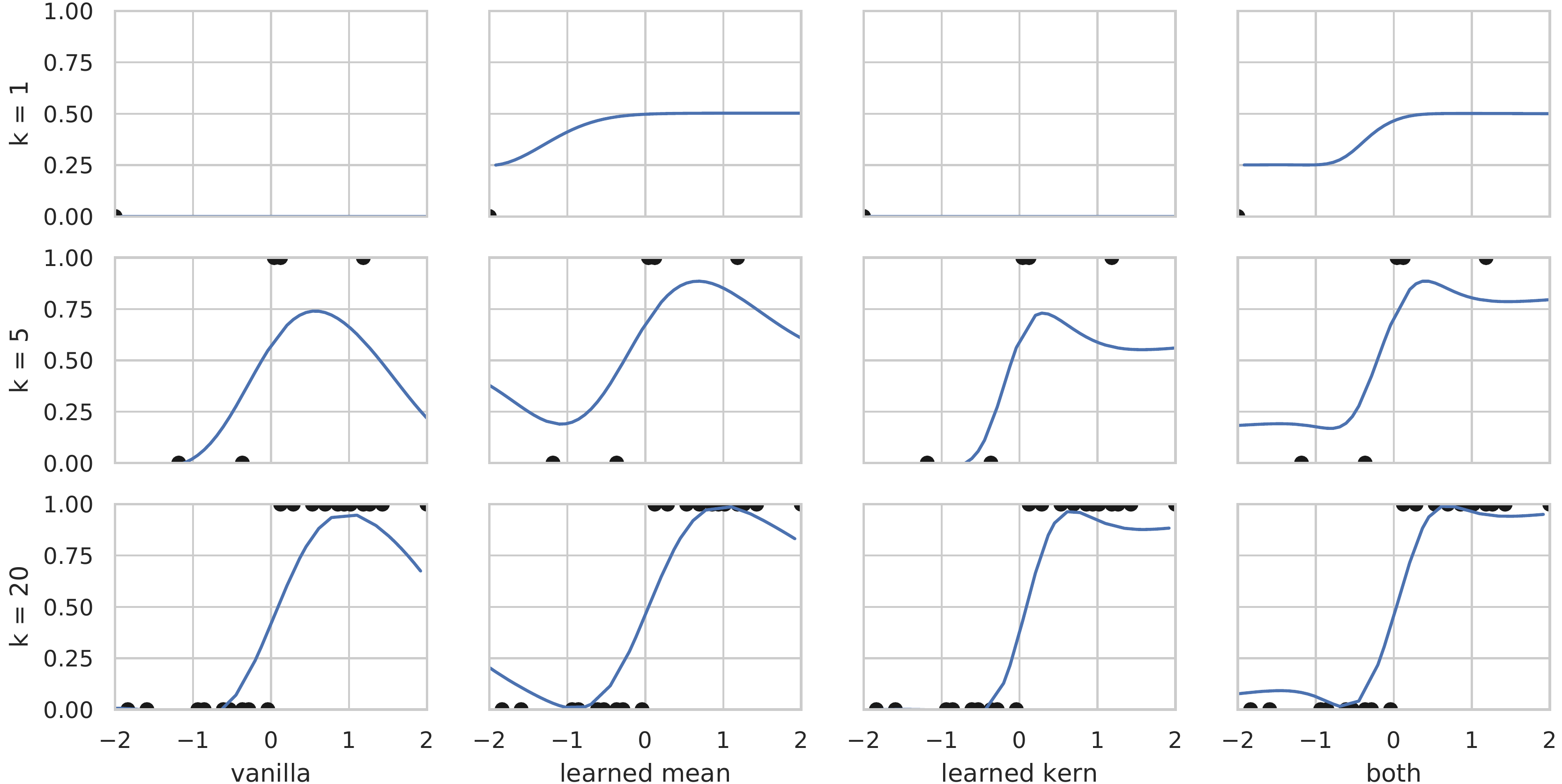}
\label{fig:step_functions}
\caption{Reconstructions of Heaviside step functions using the different methods and different numbers of training points $k$.
}
\end{figure}

To assess the performance of the mean function learning approach on a traditionally more challenging problem for GPs and compare it to kernel learning, we chose the task of step function regression.
Step functions are hard to fit for GPs due to their discontinuity \citep{Rasmussen2006-zv}.
We hypothesize that this discontinuity can be modeled more easily by a mean function than by a kernel function, since the kernel function always interpolates between neighboring points to some extent and therefore implicitly assumes continuity.
\newline
The respective performances of the methods in terms of likelihood and MSE are reported in Table~\ref{tab:step_functions} and some example regression outputs on the standard Heaviside step function for the different models and numbers of training points are depicted in Figure~\ref{fig:step_functions}.
It can be seen that in the low-data regime, mean function learning outperforms kernel learning in terms of MSE, while kernel learning yields a better likelihood.
However, the gap between mean function learning and kernel learning in terms of MSE narrows with an increasing number of training points, following our intuition that the prior becomes less important with increasing amounts of data.
\newline
It can also be seen that learning both the mean and the kernel function consistently performs best in all data regimes.
Especially in the low-data regime, we can see that this approach combines the high likelihood of the kernel learning with the low MSE of the mean function learning.
It stands to question, however, whether this effect is due to the relative simplicity of the task or whether it also holds for more complex data.

\begin{figure*}
	\centering
	\begin{subfigure}{0.45\linewidth}
		\includegraphics[width=0.95\linewidth]{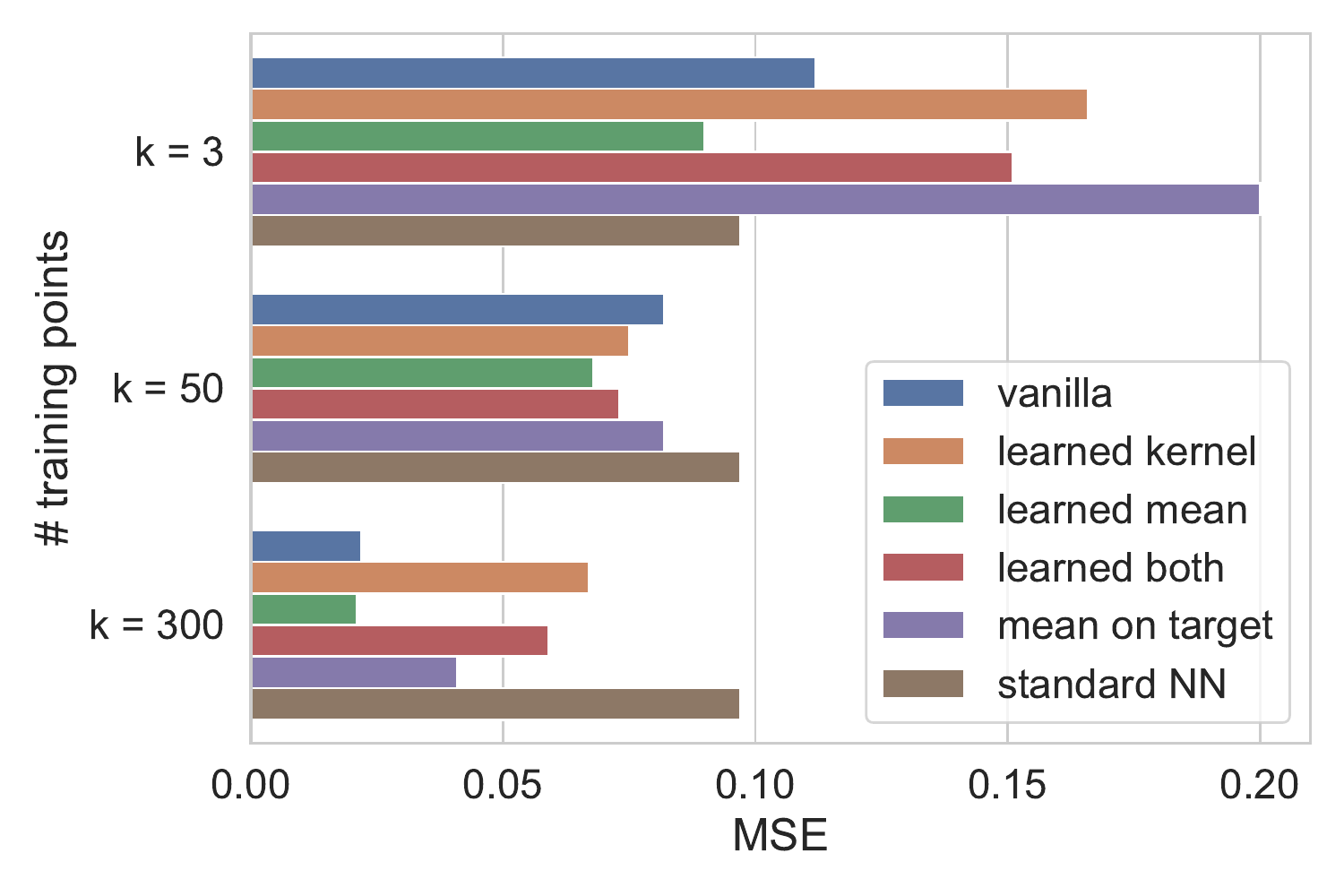}
	\end{subfigure}
	\begin{subfigure}{0.45\linewidth}
		\includegraphics[width=0.95\linewidth]{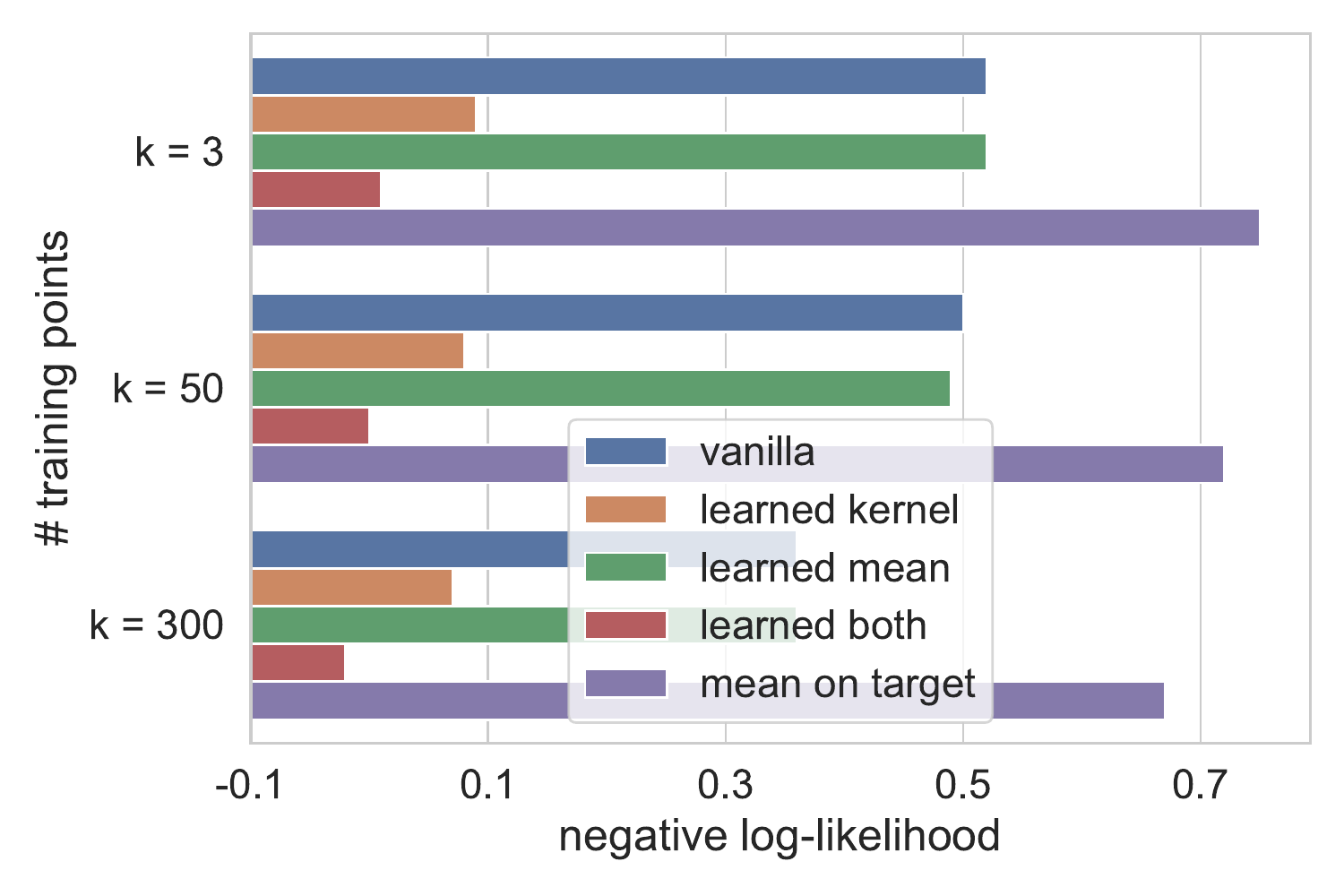}
	\end{subfigure}
	\caption{Performance comparison of the same methods as before on MNIST image completion for different numbers of training points. To illustrate overfitting in the non-meta-learning case, we also learned a mean function on the target training set directly (\emph{mean on target}). Moreover, we include a neural network without GP (\emph{standard NN}) as a baseline. The values are computed on the MNIST test set. The error bars are too small to be seen. Learning only the mean function outperforms the other methods in terms of MSE (also the neural network itself), while learning both mean and kernel function yields the best likelihoods. Moreover, learning the mean function directly on the target task leads to severe overfitting.}
	\label{fig:mnist_performance}
\end{figure*}

\paragraph{MNIST image completion}

\begin{figure}
\centering
\includegraphics[width=0.7\linewidth]{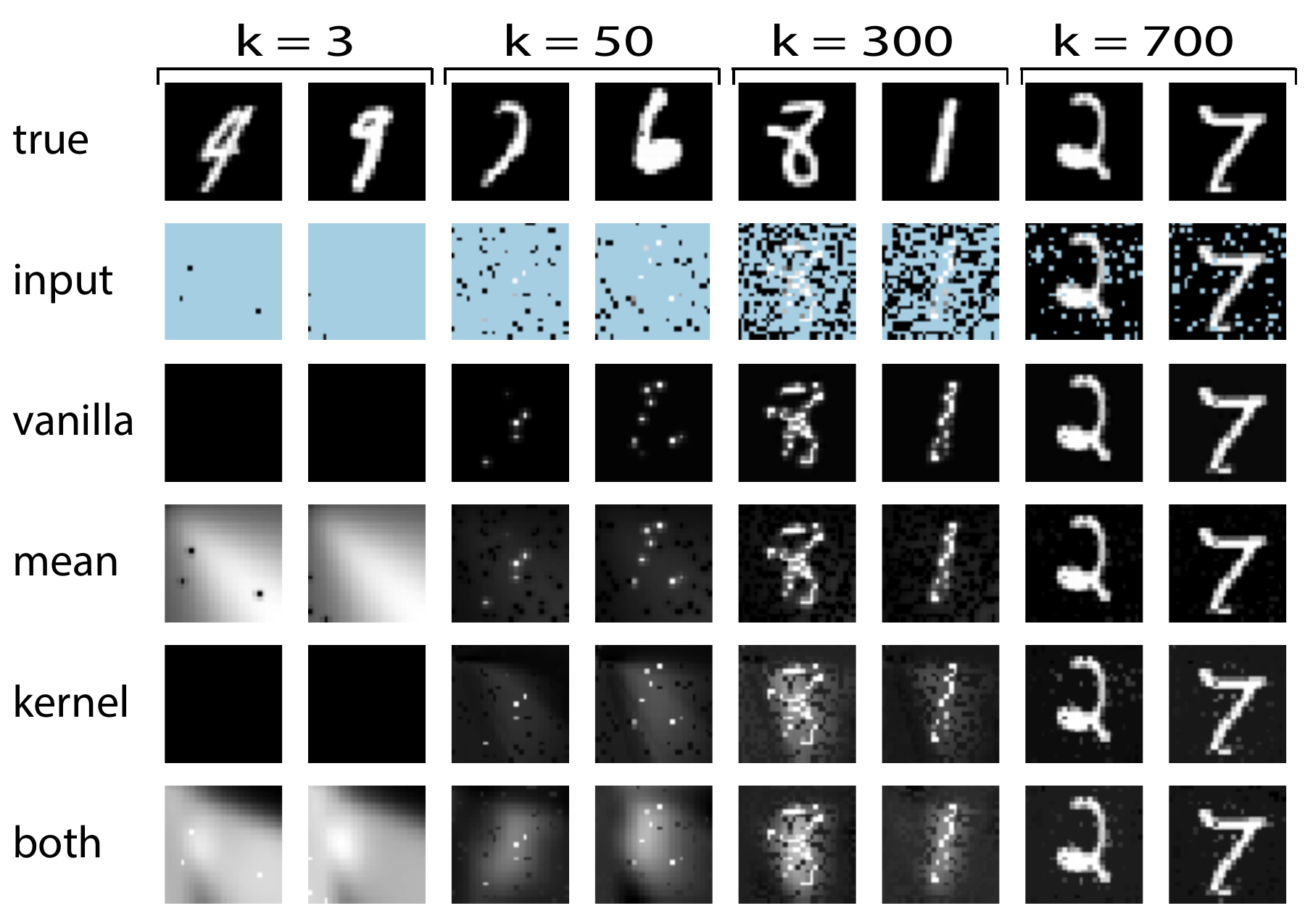}
\caption{Reconstructions of MNIST test digits using the different methods and different numbers of training points $k$.
}
\label{fig:mnist}
\end{figure}

To compare mean function learning to kernel learning in a more complex meta-learning setting, we performed an experiment with an image completion task on MNIST handwritten digits \citep{LeCun1998-es}.
We view the task of image completion, that is, inferring the pixel values in test locations given their values in some random context locations, as a regression task (similar to \citet{Garnelo2018-wb}).
We try to learn a function that maps from pixel coordinates to the pixel value, that is, $f : \{ 0, \dots, 27 \}^2 \rightarrow \left[ 0, 1 \right]$.
Some example inputs and reconstructions are depicted in Figure~\ref{fig:mnist} and quantitative results are shown in Figure~\ref{fig:mnist_performance} (and in the appendix in Tab.~\ref{tab:MNIST_completion}).
\newline
We observe that the deep kernel learning performs better than the vanilla GP and the mean function learning in terms of likelihood, but the mean function learning significantly outperforms the other methods in terms of MSE.
Encouragingly, the GP with the learned mean function also outperforms the neural network itself in terms of MSE (i.e., using the same network architecture without the GP).
\newline
Moreover, learning both mean function and kernel yields the best performance in terms of likelihood.
To illustrate the risk of overfitting mentioned in Section~\ref{sec:overfitting}, we also included a GP model in this experiment where we learned the mean function directly on the training set of the target task.
As expected, we observe that this model overfits the data and underperforms compared to all the other models (including the na\"ive baseline), highlighting the importance of the meta-learning setting for successful mean function learning.
\newline
These performance results fit the intuition of Section~\ref{sec:superior_mean}.
We can therefore confirm empirically that mean function learning can outperform kernel learning in terms of MSE under certain conditions and that it can also improve the performance in terms of likelihood when combined with kernel learning.

\begin{table*}
    \centering
    \caption{Performance comparison of the same methods as before on predicting patient trajectories on the intensive care unit for different clinical variables. The values are means and their standard errors of the 4,000 patients in the test set. The different variables are measured in different units and on different scales, thus one should pay more attention to relative differences between the methods rather than absolute ones. Learning the mean function alone or together with the kernel always outperforms learning the kernel alone.}
    \resizebox{\textwidth}{!}{
\begin{tabular}{lrrrrrr}
\toprule
& \multicolumn{3}{c}{Likelihood} & \multicolumn{3}{c}{MSE} \\
        Data & \multicolumn{1}{c}{learned mean} & \multicolumn{1}{c}{learned kernel} & \multicolumn{1}{c}{learned both} & \multicolumn{1}{c}{learned mean} & \multicolumn{1}{c}{learned kernel} & \multicolumn{1}{c}{learned both} \\
\midrule
          GCS &        -2.95 $\pm$ 0.02 &        -5.46 $\pm$ 0.03 &   \textbf{-2.44 $\pm$ 0.01} &        13.98 $\pm$ 0.24 &        82.93 $\pm$ 0.74 &         \textbf{9.15 $\pm$ 0.17} \\
        Urine &       \textbf{-25.14 $\pm$ 1.54} &       -40.20 $\pm$ 1.82 &  \textbf{-26.09 $\pm$ 1.61} &  \textbf{20674.24 $\pm$ 1505.23} &  35827.67 $\pm$ 1793.72 &  \textbf{21793.27 $\pm$ 1583.14} \\
          HCT &        \textbf{-2.93 $\pm$ 0.02} &       -18.17 $\pm$ 0.09 &   \textbf{-2.92 $\pm$ 0.01} &        \textbf{19.63 $\pm$ 0.50} &       837.31 $\pm$ 4.62 &        \textbf{19.76 $\pm$ 0.48} \\
   Creatinine &        -1.75 $\pm$ 0.05 &        -1.61 $\pm$ 0.02 &   \textbf{-1.52 $\pm$ 0.01} &         1.92 $\pm$ 0.16 &         1.58 $\pm$ 0.10 &         \textbf{0.91 $\pm$ 0.08} \\
          BUN &       -13.35 $\pm$ 0.58 &       -20.01 $\pm$ 0.62 &   \textbf{-8.88 $\pm$ 0.32} &      \textbf{441.54 $\pm$ 23.51} &     1058.40 $\pm$ 38.07 &      \textbf{412.11 $\pm$ 20.99} \\
\bottomrule
\end{tabular}
}    
\label{tab:ICU_prediction}
\end{table*}

\paragraph{Medical time series prediction}

To assess the performance of mean function learning on a challenging real-world task, we performed an experiment on medical time series data from the 2012 Physionet Challenge \citep{Silva2012predicting}.
These data correspond to the common meta-learning task of predicting the health trajectory of a patient who has recently been admitted to the hospital and for whom only few measurements are available yet.
The meta-tasks in this scenario are all the other patients that have been treated previously.
It stands to reason that the mean function could be even more important in this setting.
\newline
The results for some of the most frequently measured vital signs (for which the largest number of measurements are available) are reported in Table~\ref{tab:ICU_prediction}.
Note that the different variables are measured in different units and have different characteristic scales.
The models should therefore not be evaluated in terms of absolute performances, but rather be compared relative to each other.
We observe that the GPs that include a learned mean function (\emph{learned mean} and \emph{learned both}) consistently outperform the GP for which only the kernel has been learned.
This suggests that mean function learning can also lead to considerable benefits in challenging real-world tasks.

\section{Related work}
\label{sec:related_work}

\paragraph{GP prior learning.}
Gaussian processes have gained a lot of attention in the field of machine learning in recent years \citep{Rasmussen2006-zv}, especially combined with approaches that improve their scalability \citep{Quinonero2005unifying, Titsias2009variational, Fortuin2018scalable, fortuin2019gp}.
Moreover, kernel learning for GPs has a long history \citep{Platt2002-pe}.
It can be seen as an extension to Maximum-Likelihood-II (ML-II) parameter optimization, where the kernel parameters are optimized with respect to the \emph{log marginal likelihood} of the GP \citep{Rasmussen2006-zv}.
The flexibility of these learned kernels grows with the number of parameters, culminating in deep kernel learning, where the kernel is parameterized by a deep neural network \citep{Wilson2015-ji, Wilson2016-dj}.
While kernels have thus enjoyed a lot of attention, the research into learning mean functions is still in its infancy.
Mean functions have been optimized over different levels of recursive autoregressive GPs in multifidelity modeling \citep{le2013bayesian, perdikaris2016multifidelity}, but these approaches have used very limited functional forms for the mean functions and have not considered meta-learning, but only transfer learning across different recursion levels on the same task.
The only documented deep mean function learning for GPs \citep{Iwata2017-vl} does not deal with a meta-learning setting, but with standard optimization of the mean function parameters on the training set, which can lead to severe overfitting (see Tab.~\ref{tab:MNIST_completion}).

\paragraph{Meta-learning.}
Meta-learning has recently been explored in great detail \citep{Vilalta2002-rj}, particularly in the case of deep neural networks \citep{Bengio2012-hw, Finn2017-ef}.
Especially the work on MAML \citep{Finn2017-ef} bears certain similarities to our proposed approach (see Sec.~\ref{sec:maml}).
It has been shown that these approaches can be seen as inference in a hierarchical Bayesian model \citep{Grant2018-dv}.
These explorations extend to GP-like neural network models \citep{Garnelo2018-wb, Garnelo2018-uy}, but the setting has been underexplored when it comes to classical GPs.
While there has been some work on meta-learning kernel functions \citep{Bonilla2008-rj, Widmer2010inferring, Skolidis2012-di}, parameters for Bayesian linear regression \citep{Harrison2018-zr}, and parameters in hierarchical Bayesian models \citep{Yu2005-ti}, the proposed GP mean function learning approach has not been studied in this setting yet.

\section{Conclusion}
\label{sec:conclusion}

We have shown that meta-learning in Gaussian processes is an area that can benefit from learning the mean function of the process instead of (or combined with) the kernel function.
We have provided an analytical argument for why mean function meta-learning can be useful under certain conditions and why it poses less of a risk of overfitting compared to the standard supervised setting.
We have then validated this hypothesis empirically on benchmark tasks and real-world data.
This extends previous work in which mean function learning has not been considered for meta-learning purposes.
Moreover, we have drawn connections between mean function meta-learning and established meta-learning approaches such as MAML and functional PCA, thus lending further credence to our approach.

It would be an interesting direction for future work to explore more thoroughly under which conditions mean function learning is beneficial.
Moreover, combining mean function and kernel learning in even smarter ways could be a promising avenue of research.

\bibliography{library}

\beginsupplement

\clearpage

\onecolumn

\appendix

\section*{Appendix}

\FloatBarrier

\section{Counter-example against Assumption~\ref{lem:prior_mean}}
\label{sec:model_details}

Let us assume for simplicity that we generate data from a known noiseless process with nonzero mean, that is, $\exists x : m_{\text{true}}(x) \neq 0$, $\sigma^2 = 0$ and $\forall x : y(x) = m_{\text{true}}(x)$.
	
	If we want to fit a GP to these data and want it to yield correct predictions, we need the posterior to satisfy
	\begin{equation}
	\label{eq:proof_posterior}
	\begin{split}
		m_{true}(\mathbf{\tilde{x}^*}) = 
	&\; m_\phi(\mathbf{\tilde{x}^*}) + \\
	&\; K_\theta^{*x} \; K_\theta^{xx^{-1}} \, (m_{true}(\mathbf{\tilde{x}}) - m_\phi(\mathbf{\tilde{x}})) \; ,
	\end{split}
	\end{equation}
	where $\mathbf{\tilde{x}}$ and $\mathbf{\tilde{x}^*}$ are defined as above (Sec.~\ref{sec:meta-learning}).
	
	Let us now assume that we only happen to observe training points where the process is zero, but that it is nonzero for some test points, that is, $\forall x \in \mathbf{\tilde{x}} : m_{true}(x) = 0$ and $\exists x \in \mathbf{\tilde{x}^*} : m_{true}(x) \neq 0$ (this assumption is more likely to hold for small $\tilde{n}$).
	If we try to encode all our prior knowledge into the kernel parameters $\theta$ and choose an uninformative mean function with parameters $\phi$, that is, $m_\phi(\cdot) = 0$, the RHS of Equation~\ref{eq:proof_posterior} will always be zero, regardless of the choice of $\theta$, while the LHS will sometimes be nonzero.
	However, if we encode our prior knowledge into the mean function through $\phi$, that is, choose $m_\phi(\cdot) = m_{true}(\cdot)$, Equation~\ref{eq:proof_posterior} will always hold, regardless of the choice of kernel.
	
As mentioned above, the assumptions of this counter-example are more likely to hold for small values of $\tilde{n}$.
In fact, for $\tilde{n} \rightarrow \infty$, one can show that under mild assumptions we can always find a kernel that will make the posterior approach $m_{true}(\cdot)$ arbitrarily closely \citep{Micchelli2006-ul}.
However, the Bernstein-von Mises theorem tells us that asymptotically, any choice of GP prior is equally good anyway \citep{doob1949application}. 
In the case of scarce data (i.e., small $\tilde{n}$), which we assume to be more common in the meta-learning setting (see Sec.~\ref{sec:meta-learning}), we believe Proposition~\ref{lem:prior_mean} to be more relevant than the asymptotic kernel universality results.

\section{Proof of Proposition~\ref{prop:gp_fpca}}
\label{sec:fpca_proof}

We will show in the following that FPCA can be instantiated as a special case of our proposed framework.
For notational convenience, let us rewrite the GP posterior from \eqref{eq:gp-posterior} as
\begin{align*}
	p(\bystar \given \bxstar, \bx, \by) &= \gauss{\bmustar, \Sigma^*} \\
	\text{with} \quad \bmustar &= m(\bxstar) + \Kxstar^\top \left( \Kxx + \var \bI \right)^{-1} \left( \by - m(\bx) \right) \\
	\text{and} \quad \Sigma^* &= \var \bI + \Kstarstar - \Kxstar^\top \left( \Kxx + \var \bI \right)^{-1} \Kxstar \; .
\end{align*}

Let us now choose a specific form of mean function and kernel, namely $m(x) = \mathbf{b}(x)^\top \theta_m$ and $k(x, x') = \mathbf{b}(x)^\top \mathbf{b}(x')$, with some parameter $\theta_m$.

If we write $\Bstar$ for $\mathbf{B_{x^*}}$, the mean of the GP posterior then becomes
\begin{align*}
	\bmustar &= \Bstar \theta_m + \Bstar \Bx^\top \left( \Bx \Bx^\top + \var \bI \right)^{-1} (\by - \Bx \theta_m) \\
	&= \Bstar \theta_m + \frac{1}{\var} \Bstar \Bx^\top \left( \frac{1}{\var} \Bx \Bx^\top + \bI \right)^{-1} (\by - \Bx \theta_m) \\
	&= \Bstar \theta_m + \Bstar \left(\Bx^\top \Bx  + \var \bI \right)^{-1} \Bx^\top (\by - \Bx \theta_m) \;,
\end{align*}
where the last step uses one of the Searle identities \citep{searle2017matrix} which is equation~(162) in \citet{petersen2008matrix}.
If we now define a new parameter $\theta_c = \left(\Bx^\top \Bx  + \var \bI \right)^{-1} \Bx^\top (\by - \Bx \theta_m)$, the posterior mean becomes
\begin{equation}
\label{eq:fpca_mu}
	\bmustar = \Bstar (\theta_m + \theta_c) = \Bstar \hat{\beta} \; ,
\end{equation}
with $\hat{\beta} = \theta_m + \theta_c$.

Similarly, the posterior variance becomes
\begin{align*}
	\Sigma^* &= \var \bI + \Bstar \Bstar^\top - \Bstar \Bx^\top \left( \Bx \Bx^\top + \var \bI \right)^{-1} \Bx \Bstar^\top  \\
	&= \var \bI + \Bstar \left[ \bI - \Bx^\top \left( \Bx \Bx^\top + \var \bI \right)^{-1} \Bx \right] \Bstar^\top \\
	&= \var \bI + \Bstar \left[ \bI - \frac{1}{\var} \Bx^\top \left( \frac{1}{\var} \Bx \Bx^\top + \bI \right)^{-1} \Bx \right] \Bstar^\top \\
	&= \var \bI + \Bstar \left[ \bI + \frac{1}{\var} \Bx^\top  \Bx \right]^{-1} \Bstar^\top \; ,
\end{align*}
where the last step uses another Searle identity \citep{searle2017matrix}, namely equation~(166) in \citet{petersen2008matrix}.
We can then define $\hat{\Gamma} = \left( \bI + \frac{1}{\var} \Bx^\top  \Bx \right)^{-1}$, such that the posterior variance becomes
\begin{equation}
\label{eq:fpca_sigma}
	\Sigma^* = \var \bI + \Bstar \hat{\Gamma} \Bstar^\top \; .
\end{equation}

With \eqref{eq:fpca_mu} and \eqref{eq:fpca_sigma}, the whole GP posterior is then
\begin{equation}
	p(\bystar \given \bxstar ; \hat{\beta}, \hat{\Gamma}) = \gauss{\Bstar \hat{\beta}, \var \bI + \Bstar \hat{\Gamma} \Bstar^\top} \; ,
\end{equation}
which is exactly the distribution given by \eqref{eq:fpca} for the FPCA.

\section{Deep mean functions for GPs}
\label{sec:deep_mean}

Given the insight that meta-learning the GP's mean function might help us in solving our target task, we are still faced with the problem of choosing a suitable parameterization, that is, a suitable function class for the mean function.
Since we expect to have a reasonably large amount of data from the meta-tasks ($n_{\text{train}} \gg \tilde{n}$, see Sec.~\ref{sec:meta-learning}), we want to choose a function class that can scale easily to such amounts of data and that is flexible enough to incorporate the meta-knowledge.
A class of parametric functions that exhibit these two properties are deep neural networks \citep{LeCun2015-dw, Schmidhuber2015-ol, Goodfellow2016-fr}.

As an illustrative example, let us assume that we want to parameterize the mean function as a feed-forward neural network with two hidden layers.
The mean value at a point $\hat{x}$ will then be given by
\begin{equation}
	m_\phi(\hat{x}) = W_3 \sigma(W_2 \sigma(W_1 \hat{x} + b_1) + b_2) + b_3 \; ,
	\label{eq:deep_mean}
\end{equation}
where the $W_i$'s are the weight matrices of the layers, the $b_i$'s are their biases, and $\sigma(\cdot)$ is a nonlinear activation function (e.g., sigmoid or ReLU).
For our notation, we would then consume all the trainable parameters into the vector $\phi$, that is, $\phi = (W_i, b_i)_{i=1}^3$.

Given this functional form, we can train the mean function according to Section~\ref{sec:meta-learning}.
We compute the gradients of the LML with respect to $\phi$ using backpropagation, as implemented in various deep learning frameworks (in our experiments, we use \texttt{TensorFlow} \citep{Abadi2016tensorflow} and \texttt{PyTorch} \citep{Paszke2017automatic}).

It has been shown that using deep neural networks as kernel functions for GPs yields neural networks with nonparametric (or equivalently, ``infinitely wide'') final layers \citep{Wilson2015-ji}.
Similarly, using deep mean functions amounts to fitting a neural network to the data and then modeling the residuals of the fit with a GP \citep{Iwata2017-vl}.
It thus offers a natural way to combine the predictive power of neural networks with the calibrated uncertainties of GPs and can therefore be seen as a form of \emph{Bayesian Deep Learning} \citep{Gal2017-wz}.

\section{Experimental details}
\label{sec:exp_details}

This section contains additional details regarding the experiments (Sec.~\ref{sec:experiments}).
We implemented the algorithms in Python, using the \texttt{GPflow} package \citep{De_G_Matthews2017-ea} and the \texttt{GPyTorch} \citep{Gardner2018-bz} package.

\subsection{Sinusoid function regression}

As a proof of concept, we aim to assess the general performance of mean function meta-learning.
To this end, we simulated functions from a known generating Gaussian process.
The process had a sinusoid mean and a radial basis function (RBF) covariance.
For each function, we sampled the value at 50 equally spaced points in the $\left[ -5, 5 \right]$ interval.
Samples from the process are depicted in Figure~\ref{fig:sinusoid}.

We trained a deep feedforward neural network on 1000 sampled functions and used it as a mean function for GP regression on 200 unseen functions that were sampled from the same process.
We used a neural network with two hidden layers of size 64 each with sigmoid activation functions.
It was trained for 100 epochs using stochastic gradient descent (SGD).
We compared it against a GP with zero mean function and one with the true sinusoid mean function for different numbers of training points (Tab.~\ref{tab:sinusoid_regression}).

It can be seen that the learned mean function performs comparably with the true mean function and significantly better than the zero mean function in terms of likelihood and MSE.
It also becomes evident that this effect diminishes when the number of training points increases, following the intuition that the prior becomes less important once there are enough observations (see Sec.~\ref{sec:superior_mean}).
These findings support the efficacy of the mean function learning approach in the low-data limit.

\begin{figure}
	\centering
\includegraphics[width=0.6\linewidth]{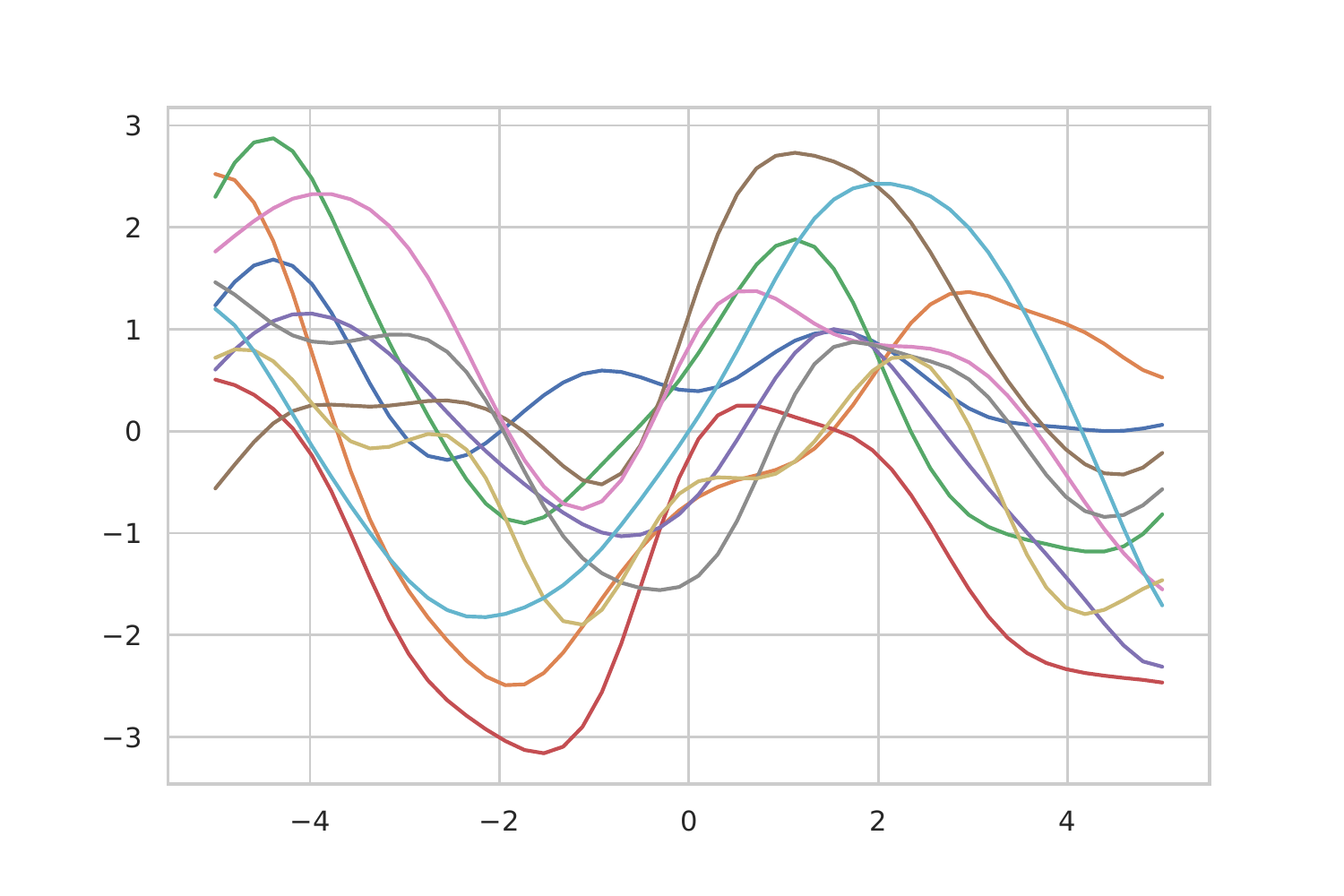}
\caption{Samples from the generative sinusoid process for the synthetic data experiment.}
\label{fig:sinusoid}
\end{figure}

\begin{table*}
    \centering
    \caption{Performance comparison of GP regression with a zero mean function, a learned mean function and the true mean function of the generating process on synthetic sinusoid function data for different numbers of training points. The performance is measured in terms of likelihood and mean squared error. The reported values are means and their respective standard errors of 200 runs. The learned mean function performs comparably with the true mean function.}
    \resizebox{\linewidth}{!}{
    \begin{tabular}{lrrrrrr}
        \toprule
         & \multicolumn{2}{c}{$\tilde{n} = 1$} & \multicolumn{2}{c}{$\tilde{n} = 5$} & \multicolumn{2}{c}{$\tilde{n} = 20$} \\
        Method & \multicolumn{1}{c}{likelihood} & \multicolumn{1}{c}{MSE} & \multicolumn{1}{c}{likelihood} & \multicolumn{1}{c}{MSE} & \multicolumn{1}{c}{likelihood} & \multicolumn{1}{c}{MSE} \\
         \midrule
         zero mean & -1.31 $\pm$ 0.07 & 1.21 $\pm$ 0.04 & -5.68 $\pm$ 0.13 & 1.02 $\pm$ 0.15 & 19.99 $\pm$ 0.22 & \textbf{0.00 $\pm$ 0.00} \\
         true mean & \textbf{-0.74 $\pm$ 0.08} & \textbf{0.79 $\pm$ 0.03} & \textbf{-4.32 $\pm$ 0.14} & \textbf{0.53 $\pm$ 0.04} & \textbf{21.55 $\pm$ 0.27} & \textbf{0.00 $\pm$ 0.00} \\
         learned mean & \textbf{-0.75 $\pm$ 0.08} & \textbf{0.80 $\pm$ 0.03} & \textbf{-4.43 $\pm$ 0.13} & \textbf{0.52 $\pm$ 0.03} & 20.76 $\pm$ 0.22 & \textbf{0.00 $\pm$ 0.00} \\
         \bottomrule
    \end{tabular}
    }
    \label{tab:sinusoid_regression}
\end{table*}

\subsection{Step function regression}

A step function is simply defined as
\begin{equation}
	f_{\text{step}} (x) =
	\begin{cases}
		y_1 & \text{if} \quad x < x_{\text{step}} \\
		y_2 & \text{otherwise}
	\end{cases}
\end{equation}
with the step location $x_{\text{step}}$ and the two function values $(y_1, y_2)$ before and after the step, respectively.
For our set of tasks, we choose a dataset of different step functions, namely the Heaviside step function \citep{Bracewell1986fourier} (i.e., $(y_1, y_2) = (0, 1)$) and its mirrored version along the $x_{\text{step}}$-axis (i.e., $(y_1, y_2) = (1, 0)$).
We sample the step location $x_{\text{step}}$ uniformly at random from the $\left[ -1, 1 \right]$ interval.
The whole function consists of 50 evenly spaced points in the $\left[ -2, 2 \right]$ domain.

We compare a vanilla GP (zero mean and RBF kernel) with a learned deep mean function, a learned deep kernel function (following \citet{Wilson2015-ji}) and a GP with deep mean and deep kernel function learned concurrently.
The learned mean and kernel functions are parameterized by the same deep feed-forward neural network architecture (except for the dimension of the last layer).
We use neural networks with two hidden layers of size 128 and 64 with sigmoid activation functions.
We train each model on 10,000 randomly sampled step functions from our function space using stochastic gradient descent.

\subsection{MNIST image completion}

\begin{table*}[h]
    \centering
    \caption{Performance comparison of the same methods as before on MNIST image completion for different numbers of training points. To illustrate overfitting in the non-meta-learning case, we also learned a mean function on the target training set directly (\emph{mean fit on target}). Moreover, we include a neural network without GP (\emph{neural network}) as a baseline. The values are means and their standard errors of the 10,000 images in the MNIST test set. Learning only the mean function outperforms the other methods in terms of MSE (also the neural network itself), while learning both mean and kernel function yields the best likelihoods. Moreover, learning the mean function directly on the target task leads to severe overfitting.}
    \resizebox{\textwidth}{!}{
    \begin{tabular}{lrrrrrr}
\toprule
& \multicolumn{2}{c}{$\tilde{n} = 3$} & \multicolumn{2}{c}{$\tilde{n} = 50$} & \multicolumn{2}{c}{$\tilde{n} = 300$} \\
        Method & \multicolumn{1}{c}{likelihood} & \multicolumn{1}{c}{MSE} & \multicolumn{1}{c}{likelihood} & \multicolumn{1}{c}{MSE} & \multicolumn{1}{c}{likelihood} & \multicolumn{1}{c}{MSE} \\
\midrule
neural network & - & 0.097 $\pm$ 0.000 & - & 0.097 $\pm$ 0.000 & - & 0.097 $\pm$ 0.000 \\
mean fit on target & -0.75 $\pm$ 0.00 & 0.200 $\pm$ 0.006 & -0.72 $\pm$ 0.00 & 0.082 $\pm$ 0.001 & -0.67 $\pm$ 0.00 & 0.041 $\pm$ 0.000 \\
\midrule
vanilla \citep{Rasmussen2006-zv}   &  -0.52 $\pm$ 0.00 &  0.112 $\pm$ 0.000 &  -0.50 $\pm$ 0.00 &  0.082 $\pm$ 0.000 &  -0.36 $\pm$ 0.00 &  0.022 $\pm$ 0.000 \\
learned kernel \citep{Wilson2015-ji}       &  -0.09 $\pm$ 0.00 &  0.166 $\pm$ 0.001 &  -0.08 $\pm$ 0.00 &  0.075 $\pm$ 0.000 &  -0.07 $\pm$ 0.00 &  0.067 $\pm$ 0.000 \\
learned mean (ours)       &  -0.52 $\pm$ 0.00 &  \textbf{0.090 $\pm$ 0.000} &  -0.49 $\pm$ 0.00 &  \textbf{0.068 $\pm$ 0.000} &  -0.36 $\pm$ 0.00 &  \textbf{0.021 $\pm$ 0.000} \\
learned both (ours)       &  \textbf{-0.01 $\pm$ 0.00} &  0.151 $\pm$ 0.001 &   \textbf{0.00 $\pm$ 0.00} &  0.073 $\pm$ 0.000 &   \textbf{0.02 $\pm$ 0.00} &  0.059 $\pm$ 0.000 \\
\bottomrule
\end{tabular}
}    
\label{tab:MNIST_completion}
\end{table*}

The learned mean and kernel functions are parameterized by the same deep feed-forward neural network architecture (except for the dimension of the last layer).
We use neural networks with two hidden layers of size 128 and 64 with sigmoid activation functions.
We train each model on the MNIST training set of images using stochastic gradient descent.
We again compare the two approaches against a ``vanilla'' GP (zero mean and RBF kernel) and a GP where both functions are learned concurrently.
We report performances on the MNIST test set with different numbers of training points (Tab.~\ref{tab:MNIST_completion}).

\subsection{Medical time series prediction}

In this data set, there are 4,000 patients in the training set and 4,000 in the test set.
The patients are observed for 48 hours on the intensive care unit (ICU), where different vital signs are measured at different frequencies.
During meta-training, we learn the mean function and kernel function of the GP on the whole time series of 48 hours.
We use the same neural network architectures as in the MNIST experiment.
We learn a different GP prior for each of the different vital signs.
During testing, we condition a GP with the meta-learned prior on the first 24 hours and try to predict the remaining 24 hours.

\end{document}